\documentclass{article}

\usepackage[main,final]{neurips_2026}

\usepackage[utf8]{inputenc} 
\usepackage[T1]{fontenc}    
\usepackage{hyperref}       
\usepackage{url}            
\usepackage{booktabs}       
\usepackage{amsfonts}       
\usepackage{nicefrac}       
\usepackage{microtype}      
\usepackage{xcolor}         
\usepackage{microtype}
\usepackage{graphicx}
\usepackage{subcaption}
\usepackage{booktabs} 
\usepackage{bm}
\usepackage{multirow}
\usepackage{multicol}
\usepackage{amsmath}
\usepackage{amssymb}
\usepackage{mathtools}
\usepackage{amsthm}
\usepackage{subcaption}
\usepackage{wrapfig}
\usepackage[capitalize,noabbrev]{cleveref}

\theoremstyle{plain}
\newtheorem{theorem}{Theorem}[section]

\theoremstyle{definition}
\newtheorem{definition}[theorem]{Definition}

\theoremstyle{remark}

\newcommand{\method}[0]{LLoPS}
\newcommand{\fullname}[0]{\textbf{L}earnable \textbf{Lo}w-Rank \textbf{P}olynomial \textbf{S}ketch}
\newcommand{\nop}[1]{}

\title{Learnable Low-Rank Polynomial Sketch for Effective Linear Attention}

\author{%
  Zongyue Qin, Haoran Deng, Jason Cong, Yizhou Sun \\
  Department of Computer Science\\
  University of California Los Angeles\\
  CA, USA \\
  \texttt{qinzongyue,denghaoran,cong,yzsun@cs.ucla.edu} \\
}

\begin{document}

\maketitle

\begin{abstract}
    Softmax attention in Transformers suffers from quadratic complexity in sequence length, making it impractical for long-context applications. Linear attention alleviates this issue by replacing the exponential kernel with alternative functions that enable linear-time computation. Among existing linear attention approaches, recent studies have shown that polynomial kernels are particularly effective, as they exhibit sparse and spiky behavior similar to softmax attention, which emphasizes large dot products while suppressing irrelevant interactions. However, the exact computation of high-degree polynomial kernels is infeasible for high-dimensional representations. As a result, prior work relies on approximate polynomial kernels. This introduces a non-negligible approximation error.
In this paper, we show that, from the perspective of polynomial kernel approximation, existing linear attention methods are still suboptimal. We propose \fullname{} (\method{}), a principled and flexible framework for approximating polynomial kernels with linear attention. Our method learns a low-rank polynomial sketch that provably achieves a strictly smaller approximation error than existing approaches.
Experiment results show that \method{} achieves the highest performance across extensive benchmarks, comparing with various linear attention baselines. 
\end{abstract}

\section{Introduction}

Transformers have become the dominant architecture for sequence modeling across natural language processing, vision, and multimodal tasks. Despite their empirical success, the quadratic time and memory complexity of the attention mechanism with respect to sequence length severely limits scalability to long-context settings. This bottleneck is particularly problematic in emerging applications such as long-document understanding, code modeling, and retrieval-augmented generation, where context lengths can easily reach tens or hundreds of thousands of tokens.

Linear attention methods~\citep{xiong2021nystromformer,team2025kimi,schlag2021deltanet} address this challenge by retaining the core attention formulation while reducing its complexity from quadratic to linear in sequence length. By rewriting softmax attention as a kernelized dot-product and exploiting associativity, linear attention enables streaming computation with constant memory. Recent large-scale deployments~\citep{team2025kimi} demonstrate that linear attention can attain a strong empirical performance while supporting extremely long contexts, thus making it a promising direction for next-generation foundation models.

Existing work on linear attention can be broadly categorized into two complementary directions. The first focuses on improving the state update rule, designing better ways to accumulate and normalize the running key-value statistics over time~\citep{team2025kimi,yang2024gated}. The second direction, which this paper focuses on, aims to improve the feature map of query and key vectors so that the induced kernel more faithfully approximates softmax attention~\citep{arora2024based,zhang2024hedgehog}.
Prior studies on feature map design have observed that high-order polynomial kernels can approximate the softmax kernel surprisingly well~\citep{kacham2023polysketchformer,arora2024based,zhang2024hedgehog}. Intuitively, both exponential and polynomial kernels can exhibit sparse, spiky behavior that emphasizes large dot-products while suppressing irrelevant interactions. However, computing exact polynomial kernels is infeasible for high degree kernel and high-dimensional vectors, as the computation complexity grows exponentially with the polynomial degree.

To address this challenge, PolySketchFormer~\citep{kacham2023polysketchformer} exploits a data-independent sketching technique to approximate exact polynomial kernels. More recently, BASED~\citep{arora2024based} realizes state-of-the-art performance in language modeling by combining linear attention and sliding window attention. In their linear attention formulation, queries and keys are first compressed into a low-dimensional space and then the exact polynomial kernel is computed in that space.

In this paper, we show that from the perspective of polynomial kernel approximation, the previous methods that approximate polynomial kernel are suboptimal. Our key insight is that, when the output feature dimension is fixed, there exists a theoretically optimal sketch that minimizes the approximation error to the target polynomial kernel. We prove that a linear projection for the $p$-th order outer products of input vectors can achieve this optimal approximation, whereas existing approaches such as BASED~\citep{arora2024based} and PolySketchFormer~\citep{kacham2023polysketchformer} can be viewed as constrained or data-independent approximations to this optimal mapping.
Building on this observation, we further propose a structured decomposition of the optimal linear mapping using low-rank tensor rank decompositions (also known as CP decompositions) and column-wise Kronecker products, which significantly reduces computational costs. We provide theoretical analysis showing that this structured approximation is strictly tighter than prior feature-map constructions under the same dimensional budget.

Our contributions are summarized as follows: 
(1) We provide a theoretical characterization of the optimal sketching strategy for approximating polynomial kernels in the context of linear attention. 
(2) Motivated by this analysis, we propose a novel \fullname{} (\method{}), a linear attention kernel that accurately approximates polynomial attention while remaining computationally efficient. 
(3) We further derive approximation error analysis for \method{} and show that several existing linear attention kernels can be interpreted as special cases within our unified framework. 
(4) We demonstrate through experiments that \method{} consistently improves the effectiveness of linear attention transformers across a broad range of benchmarks. 

\section{Preliminaries}

\subsection{Softmax-based Full Attention}
Given a sequence of queries $\bm{Q} \in \mathbb{R}^{n \times d}$, keys $\bm{K} \in \mathbb{R}^{n \times d}$, and values $\bm{V} \in \mathbb{R}^{n \times d_v}$, standard Transformers compute attention as
$
    \mathrm{Attn}(\bm{Q,K,V})
    =
    \mathrm{softmax}\!\left(\frac{\bm{QK}^\top}{\sqrt{d}}\right)\bm{V} ,
$
where the softmax is applied row-wise. Because this formulation has $\mathcal{O}(n^2)$ time and memory complexity due to the explicit computation of the attention matrix $\bm{QK}^\top$, it is prohibitively expensive for long input sequences.

\subsection{Linear Attention}\label{sec:prelim}
Previous studies have observed that softmax attention can be interpreted as kernelized dot-product attention with an exponential kernel. By approximating the exponential function with a feature map $\phi:\mathbb{R}^{d} \rightarrow \mathbb{R}^m$ and exploiting associativity, attention can be rewritten as
$
    \mathrm{Attn}(\bm{Q,K,V})=\left(\phi(\bm{Q})\phi(\bm{K})^\top\right) \bm{V}
    =
    \phi(\bm{Q})\big(\phi(\bm{K})^\top \bm{V}\big),
$
where $\phi(\bm{Q}), \phi(\bm{K}) \in \mathbb{R}^{n \times m}$ are mapped features with dimension $m$ and $\phi(\bm{Q})\phi(\bm{K})^\top$ is the inner product in the new feature space that approximates $\mathrm{softmax}\!\left(\frac{\bm{QK}^\top}{\sqrt{d}}\right)$.\footnote{We use $\phi(\bm{Q})$ to denote applying $\phi(\cdot)$ to each row vector in $\bm{Q}$. Similar for $\phi(\bm{K})$ and later usages.}
Crucially, the overall complexity is reduced from quadratic to linear in the sequence length $n$. This enables streaming computation and a constant memory state, which are key advantages of linear attention.

Recent work reveals that polynomial kernels provide a strong approximation to the exponential kernel underlying softmax attention. In particular, a degree-$p$ polynomial kernel, denoted as $\kappa(\cdot, \cdot)$, takes the form
$
    \kappa(\bm{q,k}) = (\bm{q}^\top \bm{k})^p ,
$
where $\bm{q,k}\in\mathbb{R}^d$ are query and key vectors.

However, computing the exact polynomial feature map required by linear attention involves expanding all degree-$p$ monomials, i.e.,
$
    \psi_p(\bm{x}) = \mathrm{vec}\!\left(\bm{x}^{\otimes p}\right),
$
where $\otimes p$ is $p$-th order outer product, whose dimensionality scales as $\mathcal{O}(d^p)$. This makes exact polynomial kernels impractical for real-world models.
BASED addresses this issue by first compressing the query and key vectors into a low-dimensional space using a linear projection,
\begin{equation}
    \tilde{\bm{q}} = \bm{W}_q \bm{q}, \quad \tilde{\bm{k}} = \bm{W}_k \bm{k}, \qquad \bm{W}_q,\bm{W}_k \in \mathbb{R}^{r \times d}, \; r \ll d ,
\end{equation}
and then computing exact polynomial features in the compressed space. In practice, BASED utilizes a second-order polynomial kernel
$
    \kappa(\bm{q,k}) = \psi_2(\bm{q})^{\top} \psi_2(\bm{k}) ,
$
which corresponds to the quadratic term in the Taylor expansion of the exponential kernel.\footnote{BASED proposes to use the Taylor expansion $1 + x + x^2/2$. For simplicity, we omit the constant and linear terms, as they can be handled by concatenation and do not affect the core analysis.}
By combining this polynomial linear attention with sliding-window attention to capture local interactions, BASED achieves state-of-the-art language modeling performance while maintaining favorable computational efficiency.

\subsection{Related Work}

We broadly categorize prior studies on linear attention into two directions: (i) the design of feature maps for query and key vectors, and (ii) the design of state update rules.

\paragraph{Feature Map Design.}
This line of work focuses on constructing efficient and expressive feature maps $\phi(\cdot)$ such that the softmax kernel $\exp(\bm{q}^\top\bm{k})$ can be approximated by an inner product $\phi(\bm{q})^\top\phi(\bm{k})$~\citep{katharopoulos2020transformers,choromanski2020performer,qin2022cosformer,keles2023computational,meng2025polaformer}. Among the many design choices, several recent studies have identified polynomial kernels as particularly effective approximations, as they share important inductive biases with softmax attention, such as sparsity and sharp selectivity~\citep{han2023flatten,zhang2024hedgehog,kacham2023polysketchformer,arora2024based}. As discussed by \citep{arora2024based}, many existing feature maps are substantially less effective than polynomial-kernel-based approaches when evaluated in realistic settings. Moreover, for current methods that explicitly approximate polynomial kernels~\citep{han2023flatten,kacham2023polysketchformer,arora2024based}, their approximation accuracy is still limited. Our work finds that these approximations can be further improved through a more principled and flexible feature-map construction. Overall, there remains significant room for boosting the effectiveness of linear attention feature maps.

Additionally, despite the extensive exploration of feature map design, many prior works do not assess performance on large-scale, real-world language modeling tasks. In some cases, linear attention variants are reported to outperform softmax attention, which contrasts with empirical observations in large language models where linear attention usually has sub-optimal effectiveness. 

\paragraph{State Update Rules.}
Another line of work focuses on designing improved state update rules for linear attention that aim to stabilize training and better capture long-range dependencies~\citep{schlag2021deltanet,yang2024gated,team2025kimi}. These methods often resemble state space models~\citep{gu2024mamba,peng2023rwkv} in that they maintain and update a recurrent state, rather than strictly adhering to the standard attention formulation.

Importantly, these approaches are largely orthogonal to feature map design: their update rules can be combined with different feature maps by replacing $\bm{q}$ and $\bm{k}$ with $\phi(\bm{q})$ and $\phi(\bm{k})$. Since our work focuses exclusively on improving feature maps for polynomial kernel approximation, we do not directly compare with methods that primarily modify the state update mechanism.

\paragraph{Polynomial Sketches for Neural Networks.}
Polynomial sketches have also been incorporated into neural network modules in prior work. For example, PolySketchFormer~\citep{kacham2023polysketchformer} uses polynomial sketches to approximate attention, and EUGens~\citep{kim2026eugens} applies polynomial sketches within feedforward layers. While these works demonstrate the practical usefulness of polynomial sketching, their theoretical analyses are based on data-independent sketches, where the sketching map is fixed independently of the data distribution. This setting differs fundamentally from ours: \method{} learns a data-dependent sketch tailored to the target representations, which can yield substantially better approximation under a fixed sketch dimension. To the best of our knowledge, our work is the first to provide theoretical analysis and approximation guarantees for such data-dependent polynomial sketches.
\section{Methodology}

Polynomial kernel is considered to be a good candidate to replace softmax kernel~\citep{kacham2023polysketchformer,zhang2024hedgehog,arora2024based}.
A key reason for the effectiveness of softmax attention is its sparse and spiky attention distribution: for a given query, only a small subset of keys receives large attention weights, while the majority of keys are assigned values close to zero. This behavior arises from the exponential function, which sharply amplifies differences in inner products, effectively separating keys that are well aligned with the query from those that are not.

\begin{figure}
    \centering
    \includegraphics[width=0.46\textwidth]{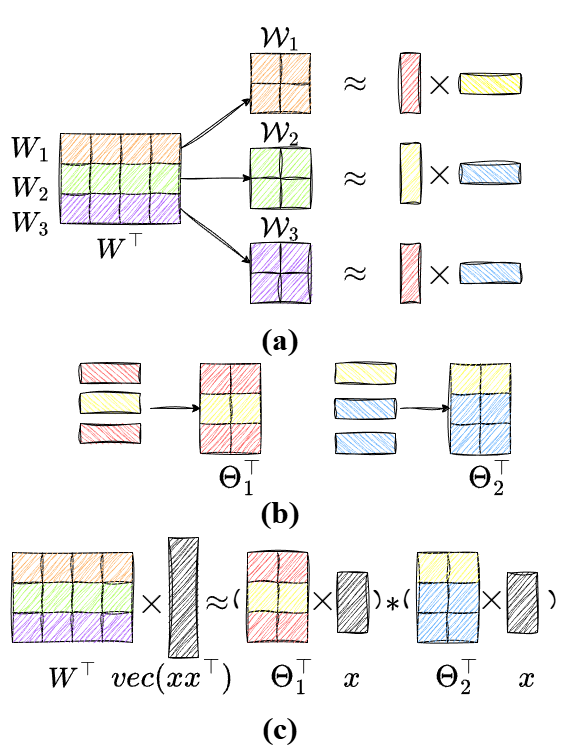}
    \caption{Illustration of \method{} with an example where $p=2,d=2,m=3$. (a) Given the optimal linear projection matrix $\bm{W}$, each column of $\bm{W}$ can be written as a 2 by 2 matrix, and be approximated by its rank-1 CP decomposition; (b) The rank-1 CP decomposition can be stacked into two weight matrices $\bm{\Theta_1},\bm{\Theta_2}$; (c) In this way, the linear projection $\bm{W}^\top\text{vec}(\bm{xx}^\top)$ can be approximated as $(\bm{\Theta_1}^\top\bm{x})*(\bm{\Theta_2}^\top\bm{x})$ where $*$ is an element-wise product.}
    \label{fig:LPS}
    \vspace{-10pt} 
\end{figure}

Polynomial kernels exhibit a similar spikiness property, as higher-order powers increasingly emphasize large inner products while suppressing smaller ones. Besides, polynomial functions are often used to approximate exponential function (e.g., Taylor expansion)~\citep{zhang2024hedgehog,arora2024based}. Thus, a polynomial kernel is a natural and effective alternative to the exponential kernel in the context of linear attention. However, as discussed in Sec \ref{sec:prelim}, an exact computation of polynomial kernels becomes impractical for high degrees or high-dimensional vectors due to the exponential growth of dimensionality.

Therefore, our work focuses on designing feature maps for queries and keys separately, denoted by $\phi_Q(\bm{Q}), \phi_K(\bm{K}):\mathbb{R}^{n\times d}\rightarrow\mathbb{R}^{n\times m}$, that accurately approximate polynomial kernels while remaining computationally efficient. Specifically, we aim to minimize the following approximation error:
\begin{equation}
\left\| \phi_Q(\bm{Q})\phi_K(\bm{K})^\top - (\bm{Q}\bm{K}^\top)^{* p} \right\|_F^2, \label{eq:obj}
\end{equation}
where $* p$ is the element-wise $p$-th power and $\|\cdot\|_f$ is Frobenius norm.

To achieve this goal, we introduce \fullname{} (\method{}). We first demonstrate that, when the output dimension $m$ is fixed, complicated feature maps are unnecessary to achieve the best compression. Instead,
linear projections of $\bm{q}^{\otimes p}$ and $\bm{k}^{\otimes p}$ are sufficient to attain the theoretically optimal approximation to the target polynomial kernel. Building on this, we further decompose the optimal linear projection and approximate it using a low-rank CP (CANDECOMP/PARAFAC) decomposition~\citep{papalexakis2016automatic}, which is also known as tensor rank decomposition, that enables a practical computation with linear attention while preserving the approximation quality. Finally, we provide a detailed error analysis of the proposed feature maps and elucidate their connections to existing linear attention kernels.

\subsection{Optimal Sketch of Polynomial Kernel}

We first consider the optimal compression of polynomial kernel when the output dimension $m$ is fixed. For clarity, let $\psi_p:\mathbb{R}^d\rightarrow \mathbb{R}^{d^p}$, where $\psi_p(\bm{x})=\text{vec}(\bm{x}^{\otimes p})$; let $\bm{A}=(\bm{Q}\bm{K}^\top)^{* p}=\psi_p(\bm{Q})\psi_p(\bm{K})^\top$\footnote{Function $\psi_p$ is applied to each row of the matrices.}; and 
let $\bm{U\Sigma V}^\top$ be the SVD decomposition of $\bm{A}$ where $\bm{\Sigma}=diag(\sigma_1,\cdots,\sigma_n)$.

Since $\phi_Q(\bm{Q})$ and $\phi_K(\bm{K})$ are both an $n\times m$ matrix ($n\gg m$), the rank of $\phi_Q(\bm{Q})\phi_K(\bm{K})^\top$ is at most $m$. By the property of SVD decomposition, the minimum possible loss of Eq. \ref{eq:obj} is 
$
 \sum_{i=m+1}^{n}\sigma_{i}^2   
$ where $\sigma_i$ is the $i$-th singular value.
It is achieved when
\begin{equation}
    \phi_Q(\bm{Q})\phi_K(\bm{K})^\top=\bm{U\Sigma_m V}^\top
    \label{eq:optimal}
\end{equation}
where
$   \bm{\Sigma}_m=diag(\sigma_1,\cdots,\sigma_m,0,\cdots,0)
$.

The following theorem shows how to achieve the minimum approximation error.

\begin{theorem}[Optimal Polynomial Sketch]
    \label{th:optimal} Let $\psi_p(\bm{x}):\mathbb{R}^d\rightarrow \mathbb{R}^{d^p}=\text{vec}(\bm{x}^{\otimes p})$.
    There exists $\bm{W}^*_Q,\bm{W}^*_K\in \mathbb{R}^{d^p\times m}$ such that $\phi_Q^*(\bm{Q})\phi_K^*(\bm{K})^\top=\bm{U\Sigma_m V}^\top$ where 
    \begin{equation}
            \phi_Q^*(\bm{Q})=\psi_p(\bm{Q})\bm{W}^*_Q\quad 
            \phi_K^*(\bm{K})=\psi_p(\bm{K})\bm{W}^*_K
    \end{equation}
\end{theorem}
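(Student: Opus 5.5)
Write $\bm{P}=\psi_p(\bm{Q})\in\mathbb{R}^{n\times d^p}$ and $\bm{R}=\psi_p(\bm{K})\in\mathbb{R}^{n\times d^p}$. Then $\bm{A}=\bm{P}\bm{R}^\top$. The plan is to write down $\bm{W}^*_Q$ and $\bm{W}^*_K$ explicitly in terms of the SVD $\bm{A}=\bm{U\Sigma V}^\top$ and pseudo-inverses of $\bm{P}$ and $\bm{R}$. The only facts needed are that the column space of $\bm{A}$ lies in the column space of $\bm{P}$, and the row space of $\bm{A}$ lies in the column space of $\bm{R}$.

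\textbf{Step 1: the top singular vectors lie in the right ranges.} Let $\bm{U}_m,\bm{V}_m\in\mathbb{R}^{n\times m}$ be the first $m$ columns of $\bm{U},\bm{V}$, and let $\bm{\Sigma}^{(m)}=\mathrm{diag}(\sigma_1,\dots,\sigma_m)$, so that $\bm{U\Sigma_m V}^\top=\bm{U}_m\bm{\Sigma}^{(m)}\bm{V}_m^\top$. If $\sigma_i>0$, then $\bm{u}_i=\sigma_i^{-1}\bm{A}\bm{v}_i=\bm{P}(\sigma_i^{-1}\bm{R}^\top\bm{v}_i)$, so $\bm{u}_i\in\mathrm{col}(\bm{P})$. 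By the symmetric argument with $\bm{A}^\top=\bm{R}\bm{P}^\top$, $\bm{v}_i\in\mathrm{col}(\bm{R})$. Indices with $\sigma_i=0$ contribute nothing to $\bm{U\Sigma_m V}^\top$. We may therefore first set those columns of $\bm{U}_m,\bm{V}_m$ to zero; this leaves the product unchanged.

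\textbf{Step 2: the construction.} Define
\begin{equation*}
\bm{W}^*_Q=\bm{P}^{+}\bm{U}_m(\bm{\Sigma}^{(m)})^{1/2},\qquad \bm{W}^*_K=\bm{R}^{+}\bm{V}_m(\bm{\Sigma}^{(m)})^{1/2},
\end{equation*}
where $^{+}$ is the Moore--Penrose pseudo-inverse. Both matrices lie in $\mathbb{R}^{d^p\times m}$. The matrix $\bm{P}\bm{P}^{+}$ is the orthogonal projector onto $\mathrm{col}(\bm{P})$, so Step 1 gives $\bm{P}\bm{W}^*_Q=\bm{U}_m(\bm{\Sigma}^{(m)})^{1/2}$. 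Likewise $\bm{R}\bm{W}^*_K=\bm{V}_m(\bm{\Sigma}^{(m)})^{1/2}$. Multiplying,
\begin{equation*}
\phi^*_Q(\bm{Q})\phi^*_K(\bm{K})^\top=\bm{U}_m\bm{\Sigma}^{(m)}\bm{V}_m^\top=\bm{U\Sigma_m V}^\top .
\end{equation*}
As an alternative to the pseudo-inverse, one can use the vectors $\bm{a}_i=\sigma_i^{-1/2}\bm{R}^\top\bm{v}_i$ and $\bm{b}_i=\sigma_i^{-1/2}\bm{P}^\top\bm{u}_i$ directly as the columns of $\bm{W}^*_Q$ and $\bm{W}^*_K$. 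This works because $\bm{P}\bm{a}_i=\sigma_i^{-1/2}\bm{A}\bm{v}_i=\sigma_i^{1/2}\bm{u}_i$, and symmetrically for $\bm{b}_i$. It avoids pseudo-inverses altogether.

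\textbf{Main obstacle.} The only delicate point is Step 1: making sure the target singular vectors are actually reachable as linear images of $\psi_p(\bm{Q})$ and $\psi_p(\bm{K})$. This holds because $\bm{A}$ factors through these feature matrices. Degenerate cases also need care: repeated or zero singular values, and $\mathrm{rank}(\bm{A})<m$. The explicit construction $\bm{a}_i=\sigma_i^{-1/2}\bm{R}^\top\bm{v}_i$ for $\sigma_i>0$, with zero columns otherwise, handles all of these uniformly. I would present that version, noting that the truncated SVD $\bm{U\Sigma_m V}^\top$ is fixed once a particular SVD of $\bm{A}$ has been chosen.
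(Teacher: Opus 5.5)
Your proof is correct and follows the same route as the paper's. Both use $\mathrm{range}(\bm{A})\subseteq\mathrm{range}(\psi_p(\bm{Q}))$ and $\mathrm{range}(\bm{A}^\top)\subseteq\mathrm{range}(\psi_p(\bm{K}))$ to solve $\psi_p(\bm{Q})\bm{W}_Q^*=\bm{U}_m\bm{\Sigma}_m^{1/2}$ and $\psi_p(\bm{K})\bm{W}_K^*=\bm{V}_m\bm{\Sigma}_m^{1/2}$. Your explicit columns $\sigma_i^{-1/2}\psi_p(\bm{K})^\top\bm{v}_i$, set to zero when $\sigma_i=0$, tighten one point: the paper's intermediate claim $\mathrm{range}(\bm{U}_m)\subseteq\mathrm{range}(\psi_p(\bm{Q}))$ need not hold when $\mathrm{rank}(\bm{A})<m$, although this is harmless because those columns are multiplied by zero singular values.
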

The proof is illustrated in Appendix \ref{app:proof_optimal}. 

Theorem \ref{th:optimal} suggests that we could compress $\psi_p(\bm{Q}),\psi_p(\bm{K})$ into a $m$-dimensional space via linear mapping, while
achieving the minimal approximation error stated in Eq \ref{eq:optimal}.

\subsection{\fullname{}}

Although Theorem \ref{th:optimal} illustrates how to achieve the optimal approximation,
since $\bm{W}_Q^*$ and $\bm{W}_K^*$ have $d^p$ rows, the linear projection is still too computationally expensive. To address this problem, we propose to decompose and approximate the linear projection matrices utilizing their low-rank CP decompositions (also known as tensor rank decomposition~\citep{papalexakis2016automatic}). Figure \ref{fig:LPS} depicts an example of our decomposed sketch for a toy example where $p=2,d=2,m=3$.

Let $\bm{W}$ denote $\bm{W}^*_Q$ or $\bm{W}^*_K$, and let $\bm{W}_i\in\mathbb{R}^{d^p}$ ($1\le i\le m$) denote the $i$-th column of $\bm{W}$. We can write $\bm{W}_i$
as a $(p)$-way tensor $\mathcal{\bm{W}}_i \in \mathbb{R}^{d\times \cdots \times d}$. According to CP decomposition, $\mathcal{\bm{W}}_i$ can be represented, with a sufficiently large $R_i$, as a linear combination of $R_i$ rank-1 tensors:
\begin{equation}
    \mathcal{\bm{W}}_i=\sum_{r=1}^{R_i} \lambda_r^{(i)} \bm{w}_{1,r}^{(i)}\otimes \bm{w}_{2,r}^{(i)}\otimes \cdots \otimes \bm{w}_{p,r}^{(i)}
    \label{eq:cp_decomposition}
\end{equation}
where $\lambda_r^{(i)}\in\mathbb{R}$, $\bm{w}_{j,r}^{(i)}\in\mathbb{R}^d$ ($1\le j \le p$), and $\otimes$ denotes outer product. 

Inspired by the CP decomposition in Eq.~\ref{eq:cp_decomposition}, the key idea of \method{} is as follows. 
Each row of $\bm{W}$ is expressed as a sum of rank-one outer products of learnable vectors, as illustrated in Figure~\ref{fig:LPS}(a). 
These vectors are trained implicitly to approximate the low-rank CP decomposition of each row vector $\bm{W}_i$. 
We then organize the learned factors into a collection of projection matrices, as shown in Figure~\ref{fig:LPS}(b), which enables an efficient computation of the feature map, avoiding explicitly constructing the high-dimensional polynomial features $\psi_p(\bm{Q})$ and $\psi_p(\bm{K})$ through a sequence of lightweight matrix multiplications and element-wise operations instead, as depicted in Figure~\ref{fig:LPS}(c).

Specifically, $\mathcal{W}_i$ can be approximated with its rank-$t$ CP decomposition, i.e.,
\begin{equation}
    \mathcal{\bm{W}}_i\approx \sum_{r=1}^{t} \lambda_r^{(i)} \bm{w}_{1,r}^{(i)}\otimes \bm{w}_{2,r}^{(i)}\otimes \cdots \otimes \bm{w}_{p,r}^{(i)}\triangleq \hat{\mathcal{\bm{W}}}_i
    \label{eq:define_approx}
\end{equation}

By reshaping the tensor back to vector,  we have an approximate column vector $\hat{\bm{W}}_i=\text{vec}(\hat{\mathcal{W}}_i)$ and
thus the approximate linear projector $
    \hat{\bm{W}}=(\hat{\bm{W}_1},\cdots,\hat{\bm{W}_m})\in\mathbb{R}^{d^p\times m}
$. Next, we demonstrate how to 
efficiently compute $\psi_p(\bm{Q})\hat{\bm{W}}$ and $\psi_p(\bm{K})\hat{\bm{W}}$ without explicitly computing $\psi_p(\bm{Q})$, $\psi_p(\bm{K})$, and $\hat{\bm{W}}$, which are with extremely high dimensions. 

Notice that the inner product between the outer product of vectors $\{\bm{w}_{1,r},\ldots,\bm{w}_{p,r}\}$ and $\text{vec}(\bm{x}^{\otimes p})$
equals the product of $\{\langle\bm{w}_{j,r},\bm{x}\rangle: 1\le j\le p\}$, i.e.,
\begin{equation}
    \langle \text{vec}(\bm{w}_{1,r}\otimes\cdots\otimes \bm{w}_{p,r}),\text{vec}(\bm{x}^{\otimes p}) \rangle=\prod_{j=1}^p \langle \bm{w}_{j,r}, \bm{x}\rangle
\end{equation}
So we have
$
    \langle\hat{\bm{W}}_i,\text{vec}(\bm{x}^{\otimes p})\rangle
    =\sum_{r=1}^{t}\lambda_r^{(i)}\prod_{j=1}^{p}\langle \bm{w}_{j,r}^{(i)},\bm{x}\rangle
$.

Thus
\begin{equation}
\begin{aligned}
    &\psi_p(\bm{Q})\hat{\bm{W}}\\=&(\psi_p(\bm{Q})\hat{\bm{W}}_1,\cdots,\psi_p(\bm{Q})\hat{\bm{W}}_m)\\
    =&\big(\sum_{r=1}^{t}\lambda_{r}^{(1)}\prod_{j=1}^{p}\bm{Q} \bm{w}_{j,r}^{(1)},\cdots, \sum_{r=1}^{t}\lambda_{r}^{(m)}\prod_{j=1}^{p}\bm{Q} \bm{w}_{j,r}^{(m)}\big)\\
    =&\sum_{r=1}^{t}\big( \lambda_{r}^{(1)}\prod_{j=1}^{p}\bm{Q} \bm{w}_{j,r}^{(1)},\cdots, \lambda_{r}^{(m)}\prod_{j=1}^{p}\bm{Q} \bm{w}_{j,r}^{(m)}\big) 
\end{aligned}
\label{eq:decompose}
\end{equation}

With Eq \ref{eq:decompose}, we have the following theorem to approximate the optimal projection matrix $\bm{W}$ with a set of matrices $\bm{\Theta}$. 
\begin{theorem}
    Let $\bm{\Theta}_{j,r}\in\mathbb{R}^{d\times m}$ be the stack of $\bm{w}_{j,r}^{(i)}$ for $i=1,\ldots,m$, i.e.,
\begin{equation}
    \bm{\Theta}_{j,r} = (\lambda_r^{(1)} \bm{w}_{j,r}^{(1)}, \cdots, \lambda_r^{(m)}\bm{w}_{j,r}^{(m)})
\end{equation}
where $1\le j\le p$, $1\le r \le t$. We have
\begin{equation}
\begin{aligned}
    &\psi_p(\bm{Q})\hat{\bm{W}}=\sum_{r=1}^t\prod_{j=1}^p \bm{Q} \bm{\Theta_{j,r}},\quad \\
   &\hat{\bm{W}}=\sum_{r=1}^t \big(\bm{\Theta_{1,r}}\odot_{col}\bm{\Theta_{2,r}}\odot_{col}\cdots\odot_{col}\bm{\Theta_{p,r}}\big)
\end{aligned}
    \label{eq:decomposition} 
\end{equation}
where $\odot_{col}$ is the column-wise Kronecker product.
Similarly, $\psi_p(\bm{K})\hat{\bm{W}} = \sum_{r=1}^t\prod_{j=1}^p \bm{K} \bm{\Theta_{j,r}}$. 
\label{th:decompose}
\end{theorem}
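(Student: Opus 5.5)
The plan is to prove both identities column by column. This works because every object in \cref{th:decompose} is determined by its $m$ columns, and each column of $\hat{\bm{W}}$ is, by \cref{eq:define_approx}, a sum of $t$ rank-one tensors. The matrix product $\prod_{j=1}^p \bm{Q}\bm{\Theta}_{j,r}$ is read as the element-wise (Hadamard) product of the $n\times m$ matrices $\bm{Q}\bm{\Theta}_{j,r}$, consistent with Figure~\ref{fig:LPS}(c) and \cref{eq:decompose}.

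\textbf{The second identity.} First I fix the convention that $\mathrm{vec}$ of an outer product satisfies $\mathrm{vec}(\bm{a}_1\otimes\cdots\otimes\bm{a}_p)=\bm{a}_1\otimes_{K}\cdots\otimes_{K}\bm{a}_p$ (Kronecker product), under the same index ordering used for $\psi_p$. By definition, the $i$-th column of $\bm{\Theta}_{1,r}\odot_{col}\cdots\odot_{col}\bm{\Theta}_{p,r}$ is the Kronecker product of the $i$-th columns $\lambda_r^{(i)}\bm{w}_{1,r}^{(i)},\dots,\lambda_r^{(i)}\bm{w}_{p,r}^{(i)}$. Summing over $r$ and comparing with $\hat{\bm{W}}_i=\mathrm{vec}(\hat{\mathcal{W}}_i)$ then gives the claimed equality column by column.

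\textbf{The first identity.} I then take the $(a,i)$ entry of both sides, where $\bm{q}_a$ is the $a$-th row of $\bm{Q}$. On the left this entry is $\langle \hat{\bm{W}}_i,\mathrm{vec}(\bm{q}_a^{\otimes p})\rangle$. By linearity and the identity $\langle \mathrm{vec}(\bm{w}_1\otimes\cdots\otimes\bm{w}_p),\mathrm{vec}(\bm{x}^{\otimes p})\rangle=\prod_j\langle\bm{w}_j,\bm{x}\rangle$ stated just before the theorem, it equals $\sum_r \lambda_r^{(i)}\prod_j\langle\bm{w}_{j,r}^{(i)},\bm{q}_a\rangle$. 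On the right the entry is $\sum_r\prod_j (\bm{Q}\bm{\Theta}_{j,r})_{a,i}=\sum_r\prod_j \lambda_r^{(i)}\langle\bm{w}_{j,r}^{(i)},\bm{q}_a\rangle$. The argument for $\bm{K}$ is identical, with $\bm{k}_a$ in place of $\bm{q}_a$.

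\textbf{Main obstacle.} The one real issue is the scalar placement. As literally written, with $\lambda_r^{(i)}$ inside every $\bm{\Theta}_{j,r}$, the product produces $(\lambda_r^{(i)})^p$ rather than $\lambda_r^{(i)}$, and the Kronecker side does the same. I would resolve this by absorbing the weight once: put $\lambda_r^{(i)}$ only into $\bm{\Theta}_{1,r}$, or equivalently scale each factor by $(\lambda_r^{(i)})^{1/p}$ with a sign adjustment on one factor. With this normalization both identities hold exactly, and since $\lambda$ can in any case be folded into the learnable factors, the statement is unchanged in substance. The remaining checks are routine: the vec/Kronecker ordering has to match $\psi_p$, which follows from $\mathrm{vec}(\bm{x}^{\otimes p})=\bm{x}\otimes_K\cdots\otimes_K\bm{x}$ and the mixed-product property $(\bm{a}_1\otimes_K\cdots\otimes_K\bm{a}_p)^\top(\bm{x}\otimes_K\cdots\otimes_K\bm{x})=\prod_j \bm{a}_j^\top\bm{x}$.
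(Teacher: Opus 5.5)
Your proposal is correct and follows the same route as the paper. The paper gives no separate proof of this theorem; it relies on the inline computation in Eq.~\ref{eq:decompose}, which applies $\langle \mathrm{vec}(\bm{w}_1\otimes\cdots\otimes\bm{w}_p),\mathrm{vec}(\bm{x}^{\otimes p})\rangle=\prod_j\langle\bm{w}_j,\bm{x}\rangle$ to each row of $\bm{Q}$ and each column of $\hat{\bm{W}}$ and then stacks, while asserting the column-wise Kronecker identity without derivation. Your scalar-placement remark is also right: Eq.~\ref{eq:decompose} keeps $\lambda_r^{(i)}$ outside the product, so putting it into every $\bm{\Theta}_{j,r}$ yields $(\lambda_r^{(i)})^p$ in both $\prod_j\bm{Q}\bm{\Theta}_{j,r}$ and $\bm{\Theta}_{1,r}\odot_{col}\cdots\odot_{col}\bm{\Theta}_{p,r}$, and absorbing $\lambda_r^{(i)}$ into a single factor is exactly the repair needed.
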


More importantly, this formulation avoids explicitly constructing the exponentially large polynomial feature map $\psi_p(\bm{Q})$: we first compute the projected matrices $\bm{Q}\bm{\Theta}_{j,r}$ for all $j=1,\dots,p$ and $r=1,\dots,t$ using standard matrix multiplications. Then, for each rank component $r$, we combine the $p$ projected terms via element-wise multiplication, corresponding to the product over $j$. Finally, we sum the resulting features across all $t$ components. Overall, the total computation cost is reduced from $O(nmd^p)$ to $O(ndmtp)$.

In practice, we find that the setting $t=1$ already attains an accurate approximation of the polynomial kernel. Setting $t>1$ will lead to too many learnable parameters. Therefore, we fix $t=1$ and our proposed feature maps is formulated as below:
\begin{definition}[\fullname{}]
\begin{equation}
    \begin{aligned}
        \phi_Q(\bm{Q})=\prod_{j=1}^p \bm{Q} \bm{\Theta_{j}}^{(Q)},\quad
        \phi_K(\bm{K})=\prod_{j=1}^p \bm{K} \bm{\Theta_{j}}^{(K)}
    \end{aligned}
\end{equation}   
where $\bm{\Theta_{j}}^{(Q)}$ and $\bm{\Theta_{j}}^{(Q)}$ ($1\le j \le p$) are learnable.
\end{definition}
The computation complexity is $O(ndmp)$, and the number of trainable parameters are $O(dmp)$. During training, $\bm{\Theta}$ are trained implicitly to approximate the low-rank CP decomposition of the optimal projecion matrix $\bm{W}_Q^*$ and $\bm{W}_K^*$.

The following theorem analyze the error of our proposed learnable polynomial sketch.
\begin{theorem}[\method{} Error]\label{th:error}
     Let $\bm{W}^*\in\mathbb{R}^{d^p\times m}$ be the optimal matrix suggested in Theorem \ref{th:optimal}. Then 
\begin{equation}
    \min_{\bm{\Theta}_{},\ldots,\bm{\Theta}_{p}}\|\bm{W}^*-\bm{\Theta}_{1}\odot\ldots \odot \bm{\Theta}_{p}\|_F^2=\|\bm{W}^{*}\|_F^2-\sum_{i=1}^{m}\tau_i^2
\end{equation}
where 
\begin{equation}
    \tau_i=\max_{\|\bm{u}_1\|=\ldots=\|\bm{u}_p\|=1}\langle \bm{W}^*[:,i],\bm{u}_1\otimes\ldots \otimes \bm{u}_p\rangle
\end{equation}
is the tensor spectral norm (i.e., the largest tensor singular value) of $\mathcal{W}_i$, and $\mathcal{W}_i\in\mathbb{R}^{d\times\ldots\times d}$ is the order $p$ tensor by reshaping the $i$th column of $\bm{W}^*\in\mathbb{R}^{d^p}$.
\end{theorem}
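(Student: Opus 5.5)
The proof will exploit the fact that the objective separates over columns. By Theorem~\ref{th:decompose} with $t=1$, the $i$-th column of $\bm{\Theta}_1\odot_{col}\cdots\odot_{col}\bm{\Theta}_p$ is $\mathrm{vec}(\bm{\theta}_{1,i}\otimes\cdots\otimes\bm{\theta}_{p,i})$, where $\bm{\theta}_{j,i}$ denotes the $i$-th column of $\bm{\Theta}_j$. The squared Frobenius norm is a sum of squared column norms, and the columns of different $\bm{\Theta}_j$ are free parameters with no coupling across $i$. Hence
\[
\min_{\bm{\Theta}_1,\ldots,\bm{\Theta}_p}\Big\|\bm{W}^*-\bm{\Theta}_1\odot\cdots\odot\bm{\Theta}_p\Big\|_F^2=\sum_{i=1}^m \min_{\bm{\theta}_{1},\ldots,\bm{\theta}_{p}\in\mathbb{R}^d}\Big\|\mathcal{W}_i-\bm{\theta}_{1}\otimes\cdots\otimes\bm{\theta}_{p}\Big\|_F^2 .
\]
It therefore suffices to show that the best rank-one approximation error of a single order-$p$ tensor $\mathcal{W}$ equals $\|\mathcal{W}\|_F^2-\tau^2$, where $\tau$ is its spectral norm. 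Summing over $i$, and using $\|\bm{W}^*\|_F^2=\sum_i\|\mathcal{W}_i\|_F^2$, then gives the claim.

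For the single-tensor problem, I will reparametrize every rank-one tensor as $\lambda\,\bm{u}_1\otimes\cdots\otimes\bm{u}_p$ with $\lambda\ge 0$ and unit vectors $\bm{u}_j$. Any zero factor gives $\lambda=0$, and any sign can be absorbed into $\bm{u}_1$. Since $\|\bm{u}_1\otimes\cdots\otimes\bm{u}_p\|_F=\prod_j\|\bm{u}_j\|=1$, expanding the square gives
\[
\|\mathcal{W}-\lambda\,\bm{u}_1\otimes\cdots\otimes\bm{u}_p\|_F^2=\|\mathcal{W}\|_F^2-2\lambda\langle\mathcal{W},\bm{u}_1\otimes\cdots\otimes\bm{u}_p\rangle+\lambda^2 .
\]
For fixed unit vectors, I minimize over $\lambda$. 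Writing $c=\langle\mathcal{W},\bm{u}_1\otimes\cdots\otimes\bm{u}_p\rangle$, the minimum is attained at $\lambda=\max(c,0)$ and equals $\|\mathcal{W}\|_F^2-\max(c,0)^2$. Minimizing over the unit vectors then amounts to maximizing $c$. Because flipping the sign of $\bm{u}_1$ flips the sign of $c$, $\max c\ge 0$ and this maximum is exactly $\tau$. The minimal error is therefore $\|\mathcal{W}\|_F^2-\tau^2$.

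The main obstacle is making sure the minimum is attained rather than merely an infimum. The concern comes from the well-known ill-posedness of best rank-$R$ CP approximation for $R\ge2$. For rank one, however, the situation is fine: the function $(\bm{u}_1,\ldots,\bm{u}_p)\mapsto\langle\mathcal{W},\bm{u}_1\otimes\cdots\otimes\bm{u}_p\rangle$ is continuous on the compact product of unit spheres, so $\tau$ is attained. The corresponding $\lambda=\tau$ then gives a minimizer, which justifies writing $\min$ in Theorem~\ref{th:error}. Beyond that, I only need to check that the notation $\odot$ in the statement means the column-wise Kronecker product of Theorem~\ref{th:decompose}, together with the identification $\bm{W}^*[:,i]=\mathrm{vec}(\mathcal{W}_i)$. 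Both are bookkeeping.
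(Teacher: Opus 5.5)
Your proof is correct and takes the same route as the paper: split the Frobenius objective column by column, then apply the best rank-one CP approximation identity $\min\|\mathcal{W}_i-\bm{\theta}_1\otimes\cdots\otimes\bm{\theta}_p\|_F^2=\|\mathcal{W}_i\|_F^2-\tau_i^2$ to each column. The paper only cites that identity ``by the definition of tensor spectral norm'', whereas you actually derive it (scale--direction reparametrization, the optimal $\lambda=\max(c,0)$, the sign-flip argument giving $\tau\ge 0$, and compactness for attainment), and you correctly index by columns of $\bm{\Theta}_j$ where the paper's proof says ``rows''.
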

The proof is given in Appendix \ref{app:proof-error}.

Some previous studies found that making the attention score of linear attention to be non-negative is helpful for training stability~\citep{kacham2023polysketchformer,han2023flatten}. If non-negativity is required, we could adopt the approach used in \citep{kacham2023polysketchformer} to apply another square to the feature map, i.e., $\phi_Q(\bm{Q})=\big(\prod_{j=1}^p \bm{Q} \bm{\Theta_{j}}^{(Q)}\big)^{*2}$, and similarly for $\phi_K(\bm{K})$.

\subsection{Relationship with Existing Feature Maps}

In this subsection, we discuss the relationship between our proposed feature map and several existing approaches for approximating polynomial kernels in linear attention.

\paragraph{Element-wise Polynomial Feature Maps} Some earlier linear attention methods~\citep{han2023flatten} adopt element-wise polynomial feature maps of the form
$
    \phi(\bm{x}) = (\bm{\Theta}\bm{x})^{* p}
$
where $* p$ denotes the element-wise $p$-th power. This construction can be viewed as a special case of our formulation in which all projection matrices are shared, i.e., $\bm{\Theta}_1 = \cdots = \bm{\Theta}_p$. Under this constraint, our kernel reduces exactly to an element-wise polynomial feature map.

\paragraph{BASED} This method~\citep{arora2024based} approximates the second-order polynomial kernel by first projecting queries and keys into a low-dimensional space via a linear mapping, and then applying the quadratic feature map $\psi_2(\cdot)$. Specifically, for $\bm{W}\in\mathbb{R}^{m'\times d}$ and $\bm{x}\in\mathbb{R}^d$, BASED computes
\begin{equation}
    \psi_2(\bm{W}\bm{x})
    =
    \mathrm{vec}(\bm{W}\bm{x}\otimes \bm{W}\bm{x})
    =
    (\bm{W}\otimes_{\mathrm{kron}} \bm{W})\,\mathrm{vec}(\bm{x}\bm{x}^\top),
\end{equation}
where $\otimes_{\mathrm{kron}}$ is the full Kronecker product.

This construction can also be recovered as a special case of our kernel. In particular, let
\[
\bm{\Theta}_1^\top = \bm{W}\odot_{\mathrm{col}}\bm{1},
\qquad
\bm{\Theta}_2^\top = \bm{1}\odot_{\mathrm{col}}\bm{W},
\]
where $\bm{1}\in\mathbb{R}^{m'\times d}$ is an all-ones matrix. Under this choice, we have
$
\bm{\Theta}_1 \odot_{\mathrm{col}} \bm{\Theta}_2
=
(\bm{W}\otimes_{\mathrm{kron}}\bm{W})^\top.
$
Therefore, with these specific parameter settings, our feature map exactly recovers the BASED kernel.

\paragraph{PolySketchFormer}
This approach~\citep{kacham2023polysketchformer} approximates polynomial kernels using randomized sketching techniques. Its feature map can be expressed as multiplication by a randomized sketching matrix $\bm{R}\in\mathbb{R}^{d^p\times m}$ satisfying the $(\epsilon,p)$-AMM property~\citep{woodruff2014sketching}. Following prior constructions~\citep{ahle2020oblivious}, $\bm{R}$ can be formulated as a column-wise Kronecker product of $p$ sparse randomized matrices, permitting efficient approximation of high-order polynomial kernels.

From our perspective, PolySketchFormer corresponds to a special case of our framework in which the projection matrices $\{\bm{\Theta}_i\}_{i=1}^p$ are fixed, randomized, and sparse, rather than learned from data.

Taken together, these observations reveal that several existing polynomial-kernel-based linear attention methods can be interpreted as special or constrained instances of our proposed feature map. By allowing more general and learnable projections, our formulation strictly generalizes these prior approaches. As a consequence, for a fixed output dimension, our method can achieve an approximation error that is no worse than these existing methods, and in practice often substantially better, as shown in the next section. This is equivalent to enforcing a constraint where half of the learnable matrices share parameters with the other half.

\section{Experiments}

In this section, we report the main experiment results comparing our method with state-of-the-art linear attention methods. Additional experiments and ablation studies are reported in Appendix \ref{app:experiment}.

\subsection{Zero-Shot NLP Benchmarks}

Following prior work on alternative sequence modeling architectures~\citep{gu2024mamba,arora2024based}, we evaluate our method on a suite of zero-shot NLP benchmarks. We compare against the standard softmax Transformer, Mamba, and BASED, using the publicly released 360M and 1B parameter checkpoints from~\citet{arora2024based}. Given that these architectures require extensive pre-training from scratch, additional linear attention baselines were excluded from this evaluation.

Since the BASED attention kernel can be viewed as a special case of our proposed kernel, we initialize our model from the BASED checkpoint and continue training it to obtain our final model to reduce training time (BASED+\method{}). For \method{}-360M, the polynomial degree $p=4$; for \method{}-1B $p=2$. Importantly, the original baselines in~\citet{arora2024based} were pretrained on the Pile dataset~\citep{gao2020pile}, which is no longer publicly available. Therefore, we perform continued training utilizing the SlimPajama-6B dataset~\citep{cerebras2023slimpajama}\footnote{\url{https://huggingface.co/datasets/DKYoon/SlimPajama-6B}}, which is 50 times smaller in size compared to Pile. For a fair comparison under this updated data setting, the BASED models are also post-trained with SlimPajama-6B for the same number of iterations. This allows us to isolate the benefits of our proposed kernel from differences arising purely from additional training data or compute. The evaluated benchmarks are implemented in \texttt{lm\_eval}~\citep{eval-harness}, including: WinoGrande~\citep{sakaguchi2021winogrande}, PIQA~\citep{bisk2020piqa}, LAMBADA~\citep{paperno2016lambada}, HellaSwag~\citep{zellers2019hellaswag}, ARC-challenge~\citep{clark2018arc}, SQUAD~\citep{rajpurkar2016squad}, SWDE~\citep{lockard2019swde,arora2023fda}, FDA~\citep{arora2023fda}. The tasks include information extraction and commonsense reasoning QA.

The results are reported in Table~\ref{tab:nlp_results}. Notably, for 360M models, we observe that after continued training on SlimPajama-6B, both BASED and our method exhibit a substantial accuracy drop on the FDA benchmark. We hypothesize that this degradation arises from a mismatch between the SlimPajama-6B and the FDA task distribution, which may induce catastrophic forgetting of task-relevant knowledge. Since this phenomenon occurs for both BASED and our method, it is more likely due to the choice of training data rather than a limitation of our proposed kernel.
Despite this issue, our method produces a higher average accuracy than BASED. These results further support the effectiveness of our approach in improving the performance of linear attention models across diverse zero-shot NLP benchmarks.

\begin{table*}[t]
\centering
\scriptsize
\caption{Zero-shot accuracy (\%) across diverse NLP benchmarks for pretrained models at two scales. 
Bold numbers indicate the best performance among linear-complexity methods within each model size. BASED+CT is BASED under the same continual training setting as \method{}.}
\label{tab:nlp_results}
\begin{tabular}{lcccccccccc}
\toprule
\multicolumn{11}{c}{\textbf{360M Models}} \\
\midrule
Method & Wino & PIQA & LAM & Hella & ARC-E & ARC-C & FDA & SWDE & SQuAD & Avg \\
\midrule
Softmax Attn & 51.78 & 63.71 & 29.32 & 33.60 & 42.85 & 24.32 & 57.89 & 56.08 & 27.88 & 43.05 \\
\midrule
Mamba       & 50.36 & 63.76 & 27.89 & 33.88 & 42.42 & 24.66 &  5.90 & 17.37 & 24.83 & 32.34 \\
BASED       & 51.30 & 63.55 & 26.64 & 33.23 & 41.88 & 24.23 & \textbf{17.42} & 27.18 & 27.41 & 34.76 \\
BASED+CT& 50.91 & 65.40 & 27.73 & 36.85 & 43.10 & \textbf{25.51} &  \textbf{8.35} & 30.87 & 26.27 & 35.00 \\
BASED+\textbf{\method{}}
            & \textbf{53.04} & \textbf{66.70} & \textbf{29.23} & \textbf{37.07} 
            & \textbf{43.94} & \textbf{25.51} &  7.53 & \textbf{31.41} 
            & \textbf{31.94} & \textbf{36.26} \\
\midrule\midrule

\multicolumn{11}{c}{\textbf{1B Models}} \\
\midrule
Method & Wino & PIQA & LAM & Hella & ARC-E & ARC-C & FDA & SWDE & SQuAD & Avg \\
\midrule
Softmax Attn & 52.49 & 67.30 & 38.07 & 39.12 & 51.05 & 22.10 & 68.78 & 68.68 & 35.79 & 49.26 \\
\midrule
Mamba       & 51.30 & 66.81 & 38.87 & 39.34 & 46.80 & 26.11 & 11.07 & 28.17 & 29.46 & 37.55 \\
BASED       & 51.62 & 66.70 & 37.86 & 39.10 & 45.54 & 25.60 & 24.05 & 42.84 & 34.58 & 40.88 \\
BASED+CT& 51.22 & 68.44 & 36.43 & 41.59 & 47.01 & \textbf{27.30} & 24.41 & 46.80 & 36.23 & 42.16 \\
BASED+\textbf{\method{}}
            & \textbf{51.62} & \textbf{68.72} & \textbf{38.11} & \textbf{42.69} 
            & \textbf{47.31} & 27.13 & \textbf{26.32} & \textbf{47.97} 
            & \textbf{36.36} & \textbf{42.91} \\
\bottomrule
\end{tabular}
\end{table*}

\subsection{Long Range Arena Benchmark}

Following prior work~\citep{zhang2024hedgehog,meng2025polaformer,han2023flatten}, we evaluate \method{} on the Long Range Arena (LRA) benchmark~\citep{tay2020lra}. Instead of the original train-from-scratch protocol, we adopt the pretraining-and-finetuning protocol of \citet{amos2023never}, which better reflects the behavior of pretrained transformers. We apply the same protocol to all reproduced baselines for a fair comparison, while reporting the best numbers from prior papers and our training results. All methods share the same architecture except for the attention layer, and we compare against softmax attention, Nyström~\citep{xiong2021nystromformer}, Skyformer~\citep{chen2021skyformer}, Hedgehog~\citep{zhang2024hedgehog}, PolaFormer~\citep{meng2025polaformer}, Flatten Transformer~\citep{han2023flatten}, PolySketchFormer~\citep{kacham2023polysketchformer}, and BASED~\citep{arora2024based}. We set the polynomial degree of \method{} to $p=4$.

Table~\ref{tab:lra_results} summarizes the results. Under this protocol, softmax attention consistently outperforms linear attention variants, aligning with empirical observations from large-scale language modeling. Among linear attention methods, \method{} achieves top-3 performance on every task and obtains the highest average accuracy, improving over the strongest linear attention baseline by $2.28\%$. These results show that \method{} substantially improves linear attention effectiveness while preserving its efficiency advantages.

\begin{table*}[t]
\scriptsize
\caption{Performance comparison across Long Range Arena (LRA) benchmarks. \textbf{Bold numbers} indicate the best performance in linear attention methods, \underline{underlined numbers} indicate the second-best, and \textit{italicized numbers} indicate the third-best.}
\label{tab:lra_results}
\centering
\begin{tabular}{lcccccc}
\toprule
Method & Image & Text & Retrieval & ListOps & Pathfinder & Avg \\
\midrule
Softmax Attn & 64.36 & 82.96 & 83.23 & 48.49 & 70.45 & 69.90 \\
\midrule
Nyström & \textbf{58.66} & 66.19 & \textbf{83.90} & \textit{45.46} & 69.71 & \underline{64.78} \\
Skyformer & \underline{58.54} & 65.25 & 79.55 & \underline{46.77} & 72.73 & \textit{64.57} \\
Hedgehog & 52.68 & \textit{66.59} & \underline{82.24} & 41.03 & \textbf{74.16} & 63.34 \\
PolaFormer & 47.74 & \underline{73.06} & 80.50 & 38.86 & 70.53 & 62.14 \\
Flatten & 51.48 & 66.13 & 78.72 & 38.51 & 70.27 & 61.02 \\
PolySketchFormer & 51.40 & 64.60& 81.10 & 43.00 & 66.20 & 61.26 \\
BASED & 51.13 & 65.73 & 80.76 & 43.55 & \textit{72.98} & 62.83 \\
\textbf{\method{}} & \textit{57.54} & \textbf{73.94} & \textit{82.17} & \textbf{48.29} & \underline{73.84} & \textbf{67.16} \\
\bottomrule
\end{tabular}
\end{table*}

\subsection{Polynomial Kernel Approximation Error}

Furthermore, we directly assess the error of our proposed sketch in approximating polynomial kernels. To that end, we obtain a set of query and key vectors with the following steps: (1) we train a transformer with the exact polynomial kernel ($Attn(\bm{q,k,v})=(\bm{q^\top k)}^p\bm{v}$, $p=2,3$) on the synthetic associative recall (AR) task introduced in~\citet{arora2024based}; (2) we select the first (layer 0), middle (layer 8), and last attention layers (layer 16) of the trained model and sample the corresponding query and key vectors with synthetic AR data. After obtaining the query and key vectors, for each layer, we apply different sketching methods to approximate the polynomial kernel on these sampled vectors.

We compare our sketch against the data-independent sketching approach used in PolySketchFormer~\citep{kacham2023polysketchformer} (PSF), as well as neural network-based sketches (NN). For the neural sketch baseline, the feature maps $\phi_q$ and $\phi_k$ are parameterized as two-layer MLPs and then trained to approximate the exact polynomial kernel. To quantify approximation quality, we report the relative mean absolute error (RMAE), defined as
\begin{equation}
    \mathrm{RMAE}=\mathbf{E}_{\bm{q},\bm{k}}\Bigg|\frac{(\bm{q}^\top\bm{k})^p-\phi_q(\bm{q})^\top\phi_k(\bm{k})}{(\bm{q}^\top\bm{k})^p}\Bigg|.
\end{equation}
The results are summarized in Table~\ref{tab:rmae_sketches}. We observe that our sketch consistently achieves substantially smaller approximation error than PSF. This aligns with the intuition that learnable methods are usually better than data-independent methods. 
Furthermore, our learnable sketch also outperforms the neural network-based feature maps. We attribute this to two main reasons. First, Theorem~\ref{th:optimal} suggests that a linear projection is sufficient to achieve optimal compression for polynomial kernels, making more complex nonlinear parameterizations unnecessary. Second, our structured and decomposed linear formulation is easier to optimize than MLP-based sketches, which may suffer from additional training instability. Overall, these results demonstrate that our proposed sketch provides an effective and accurate approximation of polynomial kernels in practical transformer query and key distributions.


\begin{table}
\vspace{-1em}
\scriptsize
\centering
\caption{Relative mean absolute error of different sketches approximating polynomial kernels with degree $p=2,3$.}
\label{tab:rmae_sketches}
\renewcommand{\arraystretch}{1.2}
\begin{tabular}{c|c|cccc}
\hline
$p$ & Method & Layer 0 & Layer 8 & Layer 15 & Avg \\
\hline
\multirow{3}{*}{$2$}
& PSF       & 0.186 & 0.247 & 0.445 & 0.293 \\
& NN        & 0.035 & 0.070 & 0.058 & 0.054 \\
& \method{} & 0.008 & 0.032 & 0.025 & 0.022 \\
\hline
\multirow{3}{*}{$3$}
& PSF       & 0.740 & 0.738 & 0.745 & 0.741 \\
& NN        & 0.059 & 0.114 & 0.143 & 0.105 \\
& \method{} & 0.018 & 0.040 & 0.058 & 0.038 \\
\hline
\end{tabular}
\vspace{-1em}
\end{table}
\nop{
\subsubsection{Effects of Polynomial Degree}

\begin{table}[t]
\centering
\caption{Performance comparison under different feature dimensions $d$ for Synthetic Association Recall Task}
\begin{tabular}{lcc}
\toprule
Method & $d=64$ & $d=128$ \\
\midrule
BASED & 0.919 & 0.981 \\
\textbf{\method{}} & \textbf{0.943} & \textbf{0.990} \\
\bottomrule
\end{tabular}

\label{tab:dim_results}
\end{table}
}

\subsection{Efficiency Comparison}

We further benchmark the runtime of our proposed linear attention kernel against softmax attention and the BASED kernel. We measure forward-pass latency over 1,000 repetitions with sequence lengths from 1,000 to 10,000 tokens, using the same PyTorch implementation environment. The input dimension is 128. For \method{}, we set the output feature dimension to 256 and $p=2$; for BASED, we compress queries and keys to 16 dimensions, yielding a comparable final feature dimension of $16^2+16+1=273$.

Figure~\ref{fig:efficiency} reports the total forward-pass runtime. As expected, both linear attention methods are much faster than softmax attention at long sequence lengths. Moreover, \method{} is consistently faster than BASED, since it mainly uses matrix multiplications and element-wise products, which are highly optimized on modern accelerators, whereas BASED involves less efficient outer-product-style operations. These results show that \method{} improves approximation quality while retaining strong practical efficiency.

\begin{figure}
    \centering
    \includegraphics[width=0.8\linewidth]{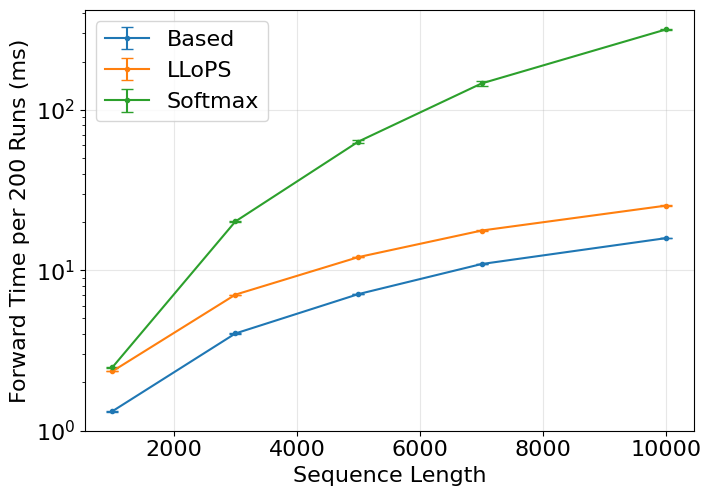}
    \caption{Time cost of different attention kernels at different input length with $p=2$.}
    \label{fig:efficiency}
\end{figure}

\section{Conclusion}

We studied polynomial-kernel-based linear attention through the lens of sketching. We provided a theoretical characterization of the optimal sketch for approximating polynomial kernels in the linear attention setting, revealing that structured and learnable projections can significantly improve approximation quality over previous data-independent feature constructions~\citep{kacham2023polysketchformer}. Building on this insight, we proposed \method{}, a Learnable Decomposed Polynomial Sketch that offers an accurate and efficient approximation to polynomial attention while retaining the favorable computational properties of linear attention.
Empirically, \method{} consistently improves the effectiveness of linear attention Transformers across a diverse set of benchmarks.

\textbf{Limitation}. Since the original pre-training dataset used by BASED is no longer publicly available, we are unable to evaluate \method{} under the same from-scratch training setting. Nevertheless, the results in Table~\ref{tab:nlp_results} show that, when BASED and \method{} are continually trained from the same initialization for the same number of iterations, \method{} consistently achieves higher accuracy. We believe this controlled comparison is sufficient to demonstrate the effectiveness of \method{}.




\bibliography{example_paper}
\bibliographystyle{plainnat}

\appendix
\section{Proofs}

\subsection{Proof of Theorem \ref{th:optimal}}\label{app:proof_optimal}

\begin{proof}
For simplicity, let $\tilde{\bm{Q}}=\psi_p(\bm{Q})$ and $\tilde{\bm{K}}=\psi_p(\bm{K})$. 
Notice that
\begin{equation}
    \tilde{\bm{Q}}\tilde{\bm{K}}^\top=(\bm{QK}^\top)^{\circ p}
\end{equation}

Let $\phi_Q(\bm{Q})=\tilde{\bm{Q}}\bm{W}_1$, $\phi_K(\bm{K})=\tilde{\bm{K}}\bm{W}_2$, where $\bm{W}_1,\bm{W}_2\in\mathbb{R}^{d^p\times m}$. We need to prove that there exist $\bm{W}_1,\bm{W}_2$ such that 
\begin{equation}
    \tilde{\bm{Q}}\bm{W}_1\bm{W}_2^\top \tilde{\bm{K}}^\top=\bm{U_m\Sigma_mV_m}^\top
\end{equation}
where $\bm{S}_m=\bm{U_m\Sigma_mV_m}^\top$ is the truncated SVD decomposition of $\bm{S}=\tilde{\bm{Q}}\tilde{\bm{K}}^\top$.

Notice that the column and row spaces of $\bm{S}_m$ are subspaces of the column and row spaces of $\bm{S}$.

Since $\bm{S}=\tilde{\bm{Q}}\tilde{\bm{K}}^\top$, the columns of $\bm{S}$ are linear combinations of the columns of $\bm{\tilde{Q}}$. Therefore, the range (column space) of $\bm{S}$ is a subspace of the range of $\bm{\tilde{Q}}$:
\begin{equation}
    \text{range}(\bm{S})\subseteq \text{range}(\tilde{\bm{Q}})
\end{equation}
Similarly, the rows of $\bm{S}$ are linear combinations of the rows of $\tilde{\bm{K}}^\top$:
\begin{equation}
    \text{range}(\bm{S}^\top)\subseteq \text{range}(\tilde{\bm{K}})
\end{equation}
As a result, $\text{range}(\bm{U_m})\subseteq \text{range}(\tilde{\bm{Q}})$ and $\text{range}(\bm{V_m})\subseteq \text{range}(\tilde{\bm{K}})$

Therefore, there must exist $\bm{W}_1,\bm{W}_2$ such that $\tilde{\bm{Q}}\bm{W}_1=\bm{U_m\Sigma_m}^{1/2}$ and $\tilde{\bm{K}}\bm{W}_2=\bm{V_m\Sigma_m}^{1/2}$.

So $\tilde{\bm{Q}}\bm{W}_1(\tilde{\bm{K}}\bm{W}_2)^\top=\bm{S}_m$
\end{proof}

\subsection{Proof of Theorem \ref{th:error}} \label{app:proof-error}

\begin{proof}
    \begin{equation}
    \min_{\bm{\Theta}_{1},\ldots,\bm{\Theta}_{p}}\|\bm{W}^*-\bm{\Theta}_{1}\odot\ldots\odot  \bm{\Theta}_{p}\|_F^2=\sum_{i=1}^{m}\min_{\bm{\Theta}_{1,i},\ldots, \bm{\Theta}_{p,i}}
    \|\bm{\Theta}_{1,i}\odot \ldots \odot \bm{\Theta}_{p,i}-\bm{W}^*[:,i]\|_F^2
\end{equation}
where $\bm{\Theta}_{j,i}$ is the $i$-th row of $\bm{\Theta}_j$, $1\le j\le p$, $1\le i\le m$.

By the definition of tensor spectral norm and the best Rank-1 CP approximation, for each $i$,

\begin{equation}
    \min_{\bm{w}_{1,i},\ldots, \bm{w}_{p,i}}
    \|\bm{w}_{1,i}\odot \ldots \odot \bm{w}_{p,i}-\bm{W}^*[:,i]\|_F^2=\|\bm{W}^*[:,i]\|_2^2-\tau_i^2
\end{equation}
\end{proof}

\section{Additional Experiment Results} \label{app:experiment}


\subsection{Synthetic Associative Recall Task}

We compare our method with Based on the synthetic associative reacall task introduced in~\citet{arora2024based}. For both methods, the hidden dimension of the model is set to $d\in\{64,128\}$. The query and key dimensions of BASED is 16. The sketch dimension of \method{} $m=256$ and $p=4$. Table \ref{tab:AR} shows the result. We can see that \method{} achieves higher accuracy than BASED, further proving its effectiveness.

\begin{table}[]
    \centering
    \caption{Comparion of Based and \method{} on synthetic associative recall task. For \method{}, $m=256$, $p=4$. For both methods $d=128$.}
    \label{tab:AR}
    \begin{tabular}{lcc}\toprule
       d  & 128 & 192 \\\midrule
       BASED  & 91.9 & 98.1\\
       \method{} & 93.2 & 99.0\\
       \bottomrule
    \end{tabular}
\end{table}

\subsection{Polynomial Degree Ablation}

Here, we study how the polynomial degree $p$ affects the effectiveness of \method{}. 
We consider the synthetic associative recall task~\citet{arora2024based} and train three 2-layer transformers from scratch using polynomial kernels with degrees $p\in\{2,4,6\}$. 
All models use a hidden dimension of 128, and the output sketch dimension is fixed to 256 across settings. 
The results are summarized in Table~\ref{tab:degree}.

We observe that increasing the degree from $p=2$ to $p=4$ leads to improved accuracy. 
This suggests that moderate higher-order interactions can better distinguish key vectors that align well with the queries from those that do not, thereby enhancing retrieval performance. 
However, when further increasing the degree to $p=6$, accuracy declines. 
We hypothesize that this drop arises because the sketch dimension $m$ remains fixed: as $p$ grows, the polynomial kernel becomes more difficult to approximate accurately, and the resulting increase in approximation error can negatively impact model effectiveness. Another possible reason is that, since the number of parameters increase with $p$, a higher value of $p$ make the model easier to overfit the training data, which would also cause the test accuracy to drop.

Overall, these results highlight an important trade-off: choosing $p$ requires balancing representational power and approximation accuracy, and higher degree is not always better in practice.

\begin{table}[]
    \centering
    \caption{Accuracy of synthetic association recall task under different polynomial degree $p$ with $d=128$.} 
    \label{tab:degree}
    \begin{tabular}{cccc}\toprule
      $p$   & 2 & 4 & 6 \\
      \midrule
       Acc  & 88.7 & 93.2 & 86.6 \\
       \bottomrule
    \end{tabular}
\end{table}

\begin{figure}[htbp]
    \centering
    \begin{subfigure}{0.4\linewidth}
        \centering
        \includegraphics[width=\linewidth]{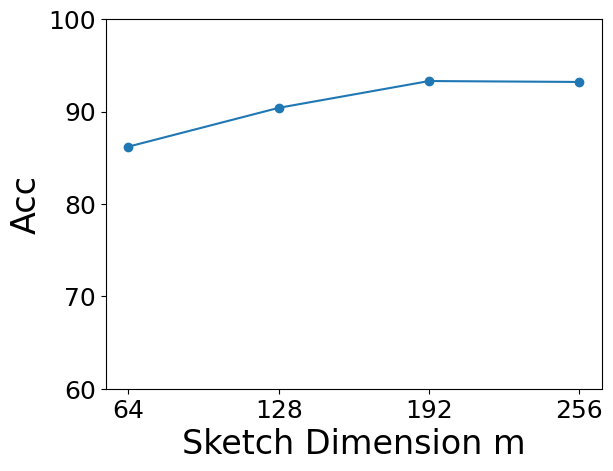}
        \caption{$m$ vs Acc}
        \label{fig:subfig1}
    \end{subfigure}
    \hfill
    \begin{subfigure}{0.4\linewidth}
        \centering
        \includegraphics[width=\linewidth]{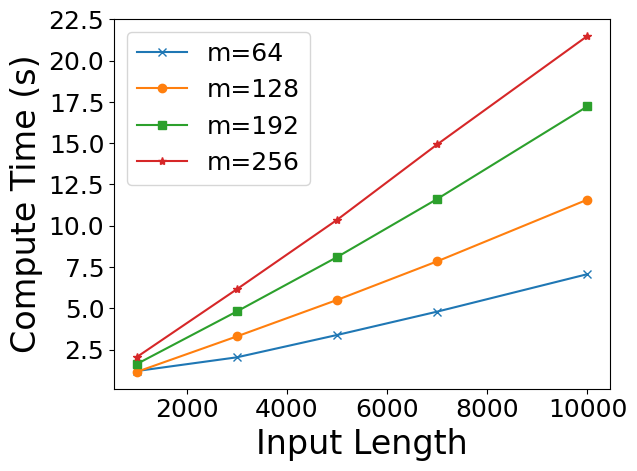}
        \caption{$m$ vs efficency}
        \label{fig:subfig2}
    \end{subfigure}
    \caption{Results of how accuracy and forward time  changes with sketch dimension $m\in\{64,128,192,256\}$ and $d=128$, $p=4$.}
    \label{fig:sketch_dim}
\end{figure}

\subsection{Sketch Dimension Ablation}

Here we demonstrate the effect of sketch dimension $m$ to effectiveness and effciency. We fix the model dimension $d=128$ and vary $m\in\{64,128,192,256\}$. We measure the accuracy for synthetic association recall task~\citep{arora2024based} and the forward computation time. The results show that when $m$ increases, the effective improves but the efficiency drops, which fits the intuition. Nevertheless, we note that when the sketch dimension drops from 256 to 192, the efficiency improves while the accuracy does not drop. This suggests that our method has great potential in approximating polynomial kernels with limited budget of state space size.

\newpage
\section*{NeurIPS Paper Checklist}

The checklist is designed to encourage best practices for responsible machine learning research, addressing issues of reproducibility, transparency, research ethics, and societal impact. Do not remove the checklist: {\bf The papers not including the checklist will be desk rejected.} The checklist should follow the references and follow the (optional) supplemental material.  The checklist does NOT count towards the page
limit. 

Please read the checklist guidelines carefully for information on how to answer these questions. For each question in the checklist:
\begin{itemize}
    \item You should answer \answerYes{}, \answerNo{}, or \answerNA{}.
    \item \answerNA{} means either that the question is Not Applicable for that particular paper or the relevant information is Not Available.
    \item Please provide a short (1--2 sentence) justification right after your answer (even for \answerNA). 
\end{itemize}

{\bf The checklist answers are an integral part of your paper submission.} They are visible to the reviewers, area chairs, senior area chairs, and ethics reviewers. You will also be asked to include it (after eventual revisions) with the final version of your paper, and its final version will be published with the paper.

The reviewers of your paper will be asked to use the checklist as one of the factors in their evaluation. While \answerYes{} is generally preferable to \answerNo{}, it is perfectly acceptable to answer \answerNo{} provided a proper justification is given (e.g., error bars are not reported because it would be too computationally expensive'' or ``we were unable to find the license for the dataset we used''). In general, answering \answerNo{} or \answerNA{} is not grounds for rejection. While the questions are phrased in a binary way, we acknowledge that the true answer is often more nuanced, so please just use your best judgment and write a justification to elaborate. All supporting evidence can appear either in the main paper or the supplemental material, provided in appendix. If you answer \answerYes{} to a question, in the justification please point to the section(s) where related material for the question can be found.

IMPORTANT, please:
\begin{itemize}
    \item {\bf Delete this instruction block, but keep the section heading ``NeurIPS Paper Checklist"},
    \item  {\bf Keep the checklist subsection headings, questions/answers and guidelines below.}
    \item {\bf Do not modify the questions and only use the provided macros for your answers}.
\end{itemize}


\begin{enumerate}

\item {\bf Claims}
    \item[] Question: Do the main claims made in the abstract and introduction accurately reflect the paper's contributions and scope?
    \item[] Answer: \answerYes{} 
    \item[] Justification: The main claims made in the abstract and introduction accurately reflect the paper's contributions and scope.

\item {\bf Limitations}
    \item[] Question: Does the paper discuss the limitations of the work performed by the authors?
    \item[] Answer: \answerYes{} 
    \item[] Justification: The limitation is discussed in conclusion.

\item {\bf Theory assumptions and proofs}
    \item[] Question: For each theoretical result, does the paper provide the full set of assumptions and a complete (and correct) proof?
    \item[] Answer: \answerYes{} 
    \item[] Justification: The proofs are provided in the main paper and appendix.

    \item {\bf Experimental result reproducibility}
    \item[] Question: Does the paper fully disclose all the information needed to reproduce the main experimental results of the paper to the extent that it affects the main claims and/or conclusions of the paper (regardless of whether the code and data are provided or not)?
    \item[] Answer: \answerYes{} 
    \item[] Justification: All information is disclosed.

\item {\bf Open access to data and code}
    \item[] Question: Does the paper provide open access to the data and code, with sufficient instructions to faithfully reproduce the main experimental results, as described in supplemental material?
    \item[] Answer: \answerNo{} 
    \item[] Justification: The code will be released upon acceptance.

\item {\bf Experimental setting/details}
    \item[] Question: Does the paper specify all the training and test details (e.g., data splits, hyperparameters, how they were chosen, type of optimizer) necessary to understand the results?
    \item[] Answer: \answerYes{} 
    \item[] Justification: All information is disclosed

\item {\bf Experiment statistical significance}
    \item[] Question: Does the paper report error bars suitably and correctly defined or other appropriate information about the statistical significance of the experiments?
    \item[] Answer: \answerYes{} 
    \item[] Justification: We include error bars in Figure \ref{fig:efficiency}. For other experiments, due to the high cost of pre-training, we cannot use different random seeds to train it multiple times.

\item {\bf Experiments compute resources}
    \item[] Question: For each experiment, does the paper provide sufficient information on the computer resources (type of compute workers, memory, time of execution) needed to reproduce the experiments?
    \item[] Answer: \answerYes{} 
    \item[] Justification: Information is disclosed
    \item[] Guidelines:
    
\item {\bf Code of ethics}
    \item[] Question: Does the research conducted in the paper conform, in every respect, with the NeurIPS Code of Ethics \url{https://neurips.cc/public/EthicsGuidelines}?
    \item[] Answer: \answerYes{} 
    \item[] Justification: This paper conform with the NeurIPS code of ethics.

\item {\bf Broader impacts}
    \item[] Question: Does the paper discuss both potential positive societal impacts and negative societal impacts of the work performed?
    \item[] Answer: \answerYes{} 
    \item[] Justification: The impact of the paper is discussed in the main paper.
    
\item {\bf Safeguards}
    \item[] Question: Does the paper describe safeguards that have been put in place for responsible release of data or models that have a high risk for misuse (e.g., pre-trained language models, image generators, or scraped datasets)?
    \item[] Answer: \answerNA{} 
    \item[] Justification: The paper has no such risk.

\item {\bf Licenses for existing assets}
    \item[] Question: Are the creators or original owners of assets (e.g., code, data, models), used in the paper, properly credited and are the license and terms of use explicitly mentioned and properly respected?
    \item[] Answer: \answerYes{} 
    \item[] Justification: They are credited.

\item {\bf New assets}
    \item[] Question: Are new assets introduced in the paper well documented and is the documentation provided alongside the assets?
    \item[] Answer: \answerNA{} 
    \item[] Justification: the paper does not release new assets.

\item {\bf Crowdsourcing and research with human subjects}
    \item[] Question: For crowdsourcing experiments and research with human subjects, does the paper include the full text of instructions given to participants and screenshots, if applicable, as well as details about compensation (if any)? 
    \item[] Answer: \answerNA{} 
    \item[] Justification: the paper does not involve crowdsourcing nor research with human subjects.

\item {\bf Institutional review board (IRB) approvals or equivalent for research with human subjects}
    \item[] Question: Does the paper describe potential risks incurred by study participants, whether such risks were disclosed to the subjects, and whether Institutional Review Board (IRB) approvals (or an equivalent approval/review based on the requirements of your country or institution) were obtained?
    \item[] Answer: \answerNA{} 
    \item[] Justification: the paper does not involve crowdsourcing nor research with human subjects.

\item {\bf Declaration of LLM usage}
    \item[] Question: Does the paper describe the usage of LLMs if it is an important, original, or non-standard component of the core methods in this research? Note that if the LLM is used only for writing, editing, or formatting purposes and does \emph{not} impact the core methodology, scientific rigor, or originality of the research, declaration is not required.
    \item[] Answer: \answerNA{} 
    \item[] Justification: the core method development in this research does not involve LLMs as any important, original, or non-standard components.

\end{enumerate}

\end{document}